\documentclass{article}



\usepackage[nonatbib,preprint]{neurips_2021}
\usepackage[square,numbers,compress]{natbib}
\bibliographystyle{unsrtnat}



\usepackage[utf8]{inputenc} 
\usepackage[T1]{fontenc}    
\usepackage{url}            
\usepackage{booktabs}       
\usepackage{amsfonts}       
\usepackage{nicefrac}       
\usepackage{microtype}      
\usepackage{xcolor}         
\usepackage{amsthm}
\usepackage{amsmath}
\usepackage{graphicx}

\definecolor{linkcolor}{RGB}{74, 102, 146}
\usepackage[colorlinks=true,allcolors=linkcolor,pageanchor=true,plainpages=false,pdfpagelabels,bookmarks,bookmarksnumbered]{hyperref}

\usepackage[nameinlink]{cleveref}
\Crefname{section}{Sect.}{Sects.}
\Crefname{appendix}{App.}{Apps.}
\Crefname{proposition}{Prop.}{Props.}

\definecolor{jr_comment}{RGB}{63, 161, 43}

\definecolor{bdacolor}{RGB}{168, 141, 201}

\title{Input Convex Gradient Networks}

\author{%
  Jack Richter-Powell  \\
  McGill University\\
  \texttt{jack.richter-powell@mail.mcgill.ca} \\
  \And
  Jonathan Lorraine\\
  University of Toronto\\
  \texttt{lorraine@cs.toronto.edu}
  \And
  Brandon Amos\\
  Facebook AI Research\\
  \texttt{bda@fb.com}
}

\begin{document}

\newtheorem{theorem}{Theorem}
\newtheorem{proposition}{Proposition}
\newtheorem{definition}{Definition}

\newcommand{\ep}{\varepsilon}
\newcommand{\N}{\mathbb{N}}
\newcommand{\R}{\mathbb{R}}
\newcommand{\Z}{\mathbb{Z}}
\newcommand{\Q}{\mathbb{Q}}
\newcommand{\cF}{\mathcal{F}}
\newcommand{\Ex}[1]{\underset{#1}{\mathbb{E}}}
\newcommand{\fa}[2]{\forall #1 \in #2}
\renewcommand{\div}{\text{div}}
\newcommand{\ang}[1]{\left\langle #1 \right\rangle}
\newcommand{\norm}[1]{\left\lVert#1\right\rVert}
\newcommand{\argmin}[1]{\underset{#1}{\text{argmin}} \hspace{6 pt}}
\newcommand{\argmax}[1]{\underset{#1}{\text{argmax}} \hspace{6 pt}}
\newcommand{\transpose}{\top}
\newcommand{\identity}{\boldsymbol{I}}


\maketitle

\begin{abstract}
  The gradients of convex functions are expressive models
  of non-trivial vector fields.
  For example, Brenier's theorem yields that the optimal transport map between any two measures on Euclidean space
  under the squared distance is realized as a convex gradient,
  which is a key insight used in recent generative flow models.
  In this paper, we study how to model convex gradients by integrating
  a Jacobian-vector product parameterized by a neural network,
  which we call the \emph{Input Convex Gradient Network (ICGN)}.
  We theoretically study ICGNs and compare them to
  taking the gradient of an
  Input-Convex Neural Network (ICNN), empirically demonstrating that a single layer ICGN can fit a toy example better than a single layer ICNN. Lastly, we explore extensions to deeper networks and connections to constructions from Riemannian geometry.
\end{abstract}

\section{Introduction}
Optimal Transport has seen an explosion
of computational interest within the machine learning community
over the last decade.
For example, the Wasserstein metric enables loss functions to
leverage the geometry of the underlying space by allowing one to
lift any ground cost on a Polish space to the space of measures in a
way that metrizes weak convergence.
Several works have explored using this distance computationally, such as in
Entropic Optimal Transport
\cite{genevay2017learning,salimans2018improving},
or in Kantorovich-Rubinstein duality for the $W_1$ cost
\cite{arjovsky2017wasserstein,gulrajani2017improved,gemici2018primaldual}.
The machine learning community has also recently been
interested in applications of
Brenier's theorem~\citep{brenier1991polar,villani2003topics},
which states that the optimal map for the $W_2$ problem is realized as
the gradient of a convex function which maximizes the Kantorovich dual problem.
This motivates the use of convex gradients for problems such as density
estimation and generative modeling, since any distribution with finite second
moment can be realized as the pushforward of a source density by a convex gradient.

In practice, however, it is difficult to expressively model the gradients of convex functions.
The leading approach -- the Input Convex Neural Network (ICNN)
\cite{amos2017input}  -- models a convex potential which
can be differentiated with respect to the inputs to produce a gradient map.
\citet{huang2021flows} combine Brenier's theorem with the ICNN gradients
to design flow based density estimators, and
\citet{makkuva2020optimal,korotin2019wasserstein} use a similar combination to
solve high-dimensional barycenter and transport problems.
While \citet{huang2021flows} prove a universal approximation theorem
for the ICNN, the result relies on stacking a large number of layers.
This detail is not just theoretical; in Section~\ref{exp:poly_example}, we give
a polynomial where a 1-layer ICNN struggles to fit its gradient.

The core difficulty with differentiating a neural network to model
a gradient is that a product structure emerges \citep{saremi2019approximating}. This happens because the chain rule turns the composition of layers into a product of their corresponding Jacobians.
This does not cause issues for training the network on objectives
involving the scalar output, like regression, but can become problematic for objectives involving the gradient of the network's output
\citep{huang2021flows,makkuva2020optimal,korotin2019wasserstein}.
Intuitively, the product of layers of a neural network has similarities to a polynomial, and can suffer from oscillations related to the Runge phenomena -- see ~\citep{metz2021gradients}.
Instead, it would be desirable to directly model the gradient in a way closer to a feedforward network, while guaranteeing that the network parameterizes a convex gradient.

\subsection{Our contributions}

  

For these reasons, we introduce a new class of models which we refer to as \textit{Input Convex Gradient Networks} (ICGN), which are implicitly parameterized by operations on the output of a "hidden" network, similar to the Neural ODE, \cite{chen2018neural}, or Deep Equilibrium Model, \cite{bai2019deq}.
Specifically, we perform a numerical line integral on a symmeterization of the Jacobian, derived from the Gram decomposition of a symmetric Positive Semi-Definite (PSD) matrix; see \eqref{eq:int_model}).
We take this indirect -- and potentially more complex -- approach for 2 reasons:
\begin{enumerate}
  \item Modeling the Jacobian directly allows us to enforce constraints on its structure.
  In Theorem \ref{thm:cvx_pot}, we show our constraints guarantee the model parameterizes a convex gradient. Due to our interpretation via the chain rule, we also believe this avoids having the ill-behaved product structure that the differentiated ICNN suffers.
  \item Implicitly modeling then integrating allows us to succinctly build complex models by leveraging the inherent complexity of integration. In this sense, we view our work as a compromise between full ODE models like \cite{chen2018neural} and a regular feedforward architecture.
\end{enumerate}

The last point relates to what we conjecture is a key benefit of our work.
We create a model by integrating a "hidden" network, with which we hope to be able to compactly parameterize complex models.
By contrast, existing work models gradients by differentiating ICNNs, where the gradient has a similar order of expressiveness to the ICNN itself and is potentially ill-behaved.


As is often said in introductory calculus:
"Differentiation is mechanics, but integration is art."
Although the gradient of an ICNN (or more generally any network) is related to the network itself by a relatively simple procedure symbolically~\citep{griewank2008evaluating} -- the same cannot be said for the integral.
It is exactly this lack of a simple relationship we hope will allow us to compactly parameterize complex models with the ICGN.



\subsection{Outline}
In \S~\ref{sec:convex-bg} we motivate the structure of our model, and give an interpretation in terms of the chain rule. In \S~\ref{sec:convex-int} we describe an operation that computes the line integral of a symmeterized Jacobian, and give conditions for when it produces a convex gradient.
In \S~\ref{sec:ICGN} we introduce our model, which applies this operation to a suitable neural network.
In \S~\ref{sec:expr} we provide an implementation and experiments on a toy problem.
We round off \S~\ref{sec:open} by discussing some open problems and directions for future work.

\subsection{Related work}

\textbf{Structured higher-order info:} 
Various applications require various guarantees about learned functions.
    We are primarily interested in convexity, which ICNNs~\citep{amos2017input} guarantee.
    \cite{anil2019sorting} looks at guaranteeing Lipschitzness, while \cite{pitis2020inductive} looks at guaranteeing the function is a valid distance.
    Sobolev networks \cite{czarnecki2017sobolev} take the alternative approach of fitting higher-order info with a soft penalty.
    In future work, we are interested in experimenting with enforcing properties other than convexity with our method.

\textbf{Implicit Models via integration:}
    Implicit models with no explicit architecture are a powerful tool for generating complex models.
    One way to generate implicit models is when the weights are used in some iterative procedure to generate an output -- ex., deep equilibrium networks~\citep{bai2019deep}.
    We are particularly interested in implicit models generated via integration -- i.e Neural ODE's~\cite{chen2018neural}, which have seen recent popularity in applications like spatio-temporal point processes~\citep{chen2020neural}.
    Jacnet~\cite{lorraine2019jacnet} proposed a related method to structure higher-order information by parameterizing models implicitly with integration.
    


\textbf{Flow-based models:}
    We are motivated by downstream applications of our methods to flow-based models, which are powerful tools for designing probabilistic models with tractable density \cite{huang2018naf}.
    Recently, \cite{huang2020convex} introduced Convex Potential Flows using the gradient map of an ICNN.
    Our method could be applied to generate similar convex flows -- we are excited for this future work.
    
\textbf{Optimal Transport:}
    \citet{makkuva2020optimal} explored using Input Convex Neural Networks for learning transportation maps, while \citet{alvarez2021optimizing} and \citet{mokrov2021largescale} used ICNN's for the Kantorovich dual specifically in the setting of Wasserstein gradient flows \cite{jordan1998variational}. \citet{fan2021scalable} also attempted solving the Wassertein Barycenter \cite{agueh2011barycenters} problem using ICNNs.


\section{Building Convex Gradients}
Below, we give proofs inline when possible, but we defer them to the appendix when they are lengthy.

\subsection{A basic structure theorem for convex gradients}
\label{sec:convex-bg}
We highlight some properties of vector mappings where the Jacobian matrix has a particular structure. Let $G :\R^n \to \R^n$ be a smooth vector field where there exists $V : \R^n \to \R^{n \times n}$ such that 
\begin{gather} DG_x = V^T(x) V(x) \label{eq:gram}\end{gather}
where $DG = \frac{\partial G}{\partial x} : \R^n \to \R^{n \times n}$ is the Jacobian matrix of $G$.
Then the following theorem holds:

\begin{theorem}
\label{thm:cvx_pot}
  For any $G$ that satisfies \eqref{eq:gram} there exists a convex function $g :\R^n \to \R$ such that $G = \nabla g$. i.e $G$ is the gradient of convex function.
\end{theorem}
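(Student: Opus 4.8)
The plan is to extract the two defining features of the Gram form $V^{T}(x)V(x)$ and recognize them as exactly the integrability and curvature conditions one needs. First I would note that for any matrix $V(x)$ the product $V^{T}(x)V(x)$ is both symmetric and positive semi-definite: symmetric because $(V^{T}V)^{T}=V^{T}V$, and PSD because $u^{T}V^{T}Vu=\norm{Vu}^{2}\ge 0$ for every $u\in\R^{n}$. Hence the hypothesis \eqref{eq:gram} secretly says that the Jacobian $DG_x$ is a symmetric PSD matrix at every point $x$, and the theorem splits into showing (i) symmetry forces $G$ to be a gradient, and (ii) PSD-ness forces that gradient's potential to be convex.

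The symmetry is the key to step (i). Writing $DG$ symmetric componentwise gives $\partial_j G_i=\partial_i G_j$ for all $i,j$, which is precisely the statement that the differential $1$-form $\omega=\sum_i G_i\,dx_i$ is closed. Since $G$ is defined on all of $\R^{n}$, which is convex (in particular star-shaped and simply connected), the Poincar\'e lemma guarantees $\omega$ is exact, i.e. there is a smooth $g:\R^{n}\to\R$ with $\nabla g=G$. Concretely I would construct $g$ by the line integral $g(x)=\int_0^1 G(tx)\cdot x\,dt$ and verify by differentiating under the integral sign that $\partial_k g(x)=\int_0^1\bigl(\sum_i t\,x_i\,\partial_k G_i(tx)+G_k(tx)\bigr)\,dt$; substituting the symmetry relation $\partial_k G_i=\partial_i G_k$ collapses the integrand to $\frac{d}{dt}\bigl[t\,G_k(tx)\bigr]$, which integrates to $G_k(x)$. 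This makes the existence of the potential explicit rather than abstract.

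Finally I would upgrade $g$ from merely a potential to a convex potential, handling step (ii). Since $\nabla g=G$, the Hessian satisfies $D^{2}g=DG=V^{T}(x)V(x)$, which by the first paragraph is PSD everywhere. A $C^{2}$ function on a convex domain whose Hessian is PSD at every point is convex; I would justify this by restricting to an arbitrary segment and noting that $t\mapsto g(x+t(y-x))$ has second derivative $(y-x)^{T}D^{2}g\,(y-x)\ge 0$, so each such restriction is convex, which is equivalent to convexity of $g$. This yields the claimed convex $g$ with $G=\nabla g$.

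I expect the main obstacle to be the middle step, rigorously obtaining the potential $g$ from the symmetry of $DG$. The symmetry and PSD observations are essentially automatic, and the convexity conclusion is a standard consequence once the Hessian is known to be PSD; the real content is the Poincar\'e lemma, namely the path-independence of the line integral, which relies crucially on the domain being simply connected. If one wished to avoid invoking the Poincar\'e lemma as a black box, the work would instead go into directly verifying path-independence of $\int G\cdot d\gamma$, or equivalently into the differentiation-under-the-integral computation above where $\partial_j G_i=\partial_i G_j$ is what makes the terms telescope.
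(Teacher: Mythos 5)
Your proposal is correct and follows essentially the same route as the paper: symmetry of $V^{T}V$ makes $G$ a closed $1$-form, the Poincar\'e lemma on $\R^{n}$ yields the potential $g$, and positive semi-definiteness of the Hessian $DG = V^{T}V$ gives convexity. Your explicit line-integral construction $g(x)=\int_0^1 G(tx)\cdot x\,dt$ and the segment-restriction argument for convexity are just welcome unpackings of the lemmas the paper cites, not a different method.
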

The proof follows by combining the following 2 propositions:

\begin{proposition}
  There exists $\varphi : \R^n \to \R$ such that 
  $\nabla \varphi = G$.
  i.e, it is a potential field.
\end{proposition}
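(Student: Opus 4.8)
The plan is to produce the potential explicitly, leveraging the fact that the Gram structure in \eqref{eq:gram} forces the Jacobian to be symmetric. Indeed, $\big(V^\transpose(x) V(x)\big)^\transpose = V^\transpose(x) V(x)$, so $DG_x$ is a symmetric matrix at every point $x$. This is precisely the integrability (closedness) condition $\partial_i G_j = \partial_j G_i$ for $G$ to be conservative. Since $\R^n$ is convex — in particular star-shaped about the origin, hence simply connected — one expects a potential to exist by the Poincar\'e lemma; I would nonetheless write it down concretely to keep the dependence on \eqref{eq:gram} transparent and the argument self-contained.

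Concretely, I would define the candidate potential as the line integral of $G$ along the segment joining the origin to $x$,
\begin{equation}
  \varphi(x) := \int_0^1 \ang{G(tx), x}\, dt .
\end{equation}
First I would differentiate under the integral sign, which is justified on any compact neighborhood since $G$ is smooth and therefore $G$ and $DG$ are bounded there. This yields, for each coordinate $k$,
\begin{equation}
  \partial_k \varphi(x) = \int_0^1 \Big( G_k(tx) + t \sum_j (\partial_k G_j)(tx)\, x_j \Big)\, dt .
\end{equation}

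The crucial step is to invoke the symmetry $\partial_k G_j = \partial_j G_k$ established above, which rewrites the integrand as the total derivative $\tfrac{d}{dt}\big( t\, G_k(tx) \big)$; the fundamental theorem of calculus then collapses the integral to $G_k(x)$, giving $\nabla \varphi = G$ as desired. I expect the only genuine obstacle to be the justification of differentiating under the integral sign together with the bookkeeping needed to recognize the total derivative, both of which are routine consequences of smoothness and the symmetry of $DG$. Alternatively, one could simply appeal to the Poincar\'e lemma on the simply connected domain $\R^n$, but the explicit segment integral makes it clear that convexity of the Jacobian's Gram form is exactly what is being used.
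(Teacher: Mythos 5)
Your proof is correct, but it takes a genuinely different (more hands-on) route than the paper. The paper proves this proposition in the language of differential forms: it identifies $G$ with the 1-form $\sum_i G_i\, dx^i$, observes that closedness is exactly symmetry of $DG$ (which holds since $V^\transpose V$ is symmetric), and then invokes the Poincar\'e lemma on $\R^n$ to conclude exactness, i.e.\ $G = \nabla g$ for some 0-form $g$. You instead unwind the Poincar\'e lemma by hand for this special case: you write down the radial line-integral potential $\varphi(x) = \int_0^1 \ang{G(tx), x}\, dt$, differentiate under the integral, use the symmetry $\partial_k G_j = \partial_j G_k$ to recognize the integrand as $\tfrac{d}{dt}\bigl(t\, G_k(tx)\bigr)$, and apply the fundamental theorem of calculus -- this computation is correct and complete. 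What your approach buys is self-containedness and transparency: it needs only multivariable calculus, makes explicit where the star-shapedness of $\R^n$ and the symmetry of $DG$ enter, and in fact mirrors the paper's own later construction, since the radial integral you wrote has exactly the same form as the model definition in \eqref{eq:int_model} and the Stokes-theorem argument used to prove Theorem~\ref{thm:jacfac}. What the paper's approach buys is brevity and consistency with the differential-geometric framework (closed/exact forms, pullback metrics) that the rest of the paper relies on, so the same vocabulary can be reused for Proposition~\ref{prop3} and the appendix proofs.
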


\begin{proof}
  Treating $G$ as a 1-form by identifying
  $G = \sum_i G_i dx^i$ (which is possible since $T_p\R^n \simeq \R^n$), $G$ is closed if (see equation 11.21 in  \cite{lee2013introduction})
  \[ \frac{\partial G_i}{\partial x^j} = \frac{\partial G_j}{\partial x^i}  \qquad  \forall i,j \]
  This is exactly equivalent to the Jacobian of $G$ being symmetric, since the entries are $[DF]_{ij} = \frac{\partial G_i}{\partial x^j}$. But that follows because the product $V^TV$ is symmetric by construction. Now since the domain is the Euclidean space, by the Poincaré lemma \cite{lee2013introduction}, all closed 1-forms are exact. Thus there exists a 0-form $g$ such that 
  $ dg = G $.
  However, for 0-forms, the exterior derivative just reduces to the gradient (modulo lowering an index) thus $\nabla g = G$ and so $G$ is conservative. 
\end{proof}

\begin{proposition}
  The $g$ that satisfies 
  $\nabla g = G$
  is convex.
\end{proposition}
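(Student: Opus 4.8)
The plan is to use the second-order characterization of convexity: a twice-continuously-differentiable function on the convex domain $\R^n$ is convex if and only if its Hessian is positive semi-definite (PSD) at every point. Since $G$ is assumed smooth, the potential $g$ produced by the previous proposition is itself smooth (in particular $C^2$), so this characterization applies and it suffices to check that $\mathrm{Hess}(g)_x \succeq 0$ for all $x$.

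First I would identify the Hessian of $g$ with the Jacobian of $G$. Because $G = \nabla g$, we have $G_j = \partial g / \partial x^j$, and therefore $\mathrm{Hess}(g)_{ij} = \partial^2 g / \partial x^i \partial x^j = \partial G_j / \partial x^i = [DG]_{ji}$. Since $DG = V^\transpose V$ is symmetric (as already used in the previous proposition), this gives $\mathrm{Hess}(g)_x = DG_x$ exactly. Thus verifying PSD-ness of the Hessian reduces to verifying PSD-ness of $DG_x$.

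Next I would invoke the Gram structure from \eqref{eq:gram}. For any fixed $x$ and any $w \in \R^n$, the quadratic form is $w^\transpose DG_x\, w = w^\transpose V^\transpose(x) V(x) w = \norm{V(x) w}^2 \geq 0$. Hence $DG_x = V^\transpose(x)V(x) \succeq 0$ for every $x$, so $\mathrm{Hess}(g)_x \succeq 0$ everywhere, and the second-order condition yields that $g$ is convex.

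I do not expect a substantial obstacle here: the argument is essentially the observation that a Gram matrix is automatically PSD, combined with the standard equivalence between global convexity and a pointwise PSD Hessian on a convex domain. The only points requiring care are bookkeeping ones — confirming that $g$ inherits enough regularity from $G$ to speak of its Hessian, and matching the index conventions so that $\mathrm{Hess}(g)$ is genuinely identified with $DG$ rather than its transpose (which is harmless here since $DG$ is symmetric). If one wanted to avoid appealing to the $C^2$ characterization, an alternative is to integrate $DG$ along line segments to recover the first-order convexity inequality $g(y) \geq g(x) + \ang{G(x), y - x}$ directly, but the Hessian route is cleaner given the smoothness already in hand.
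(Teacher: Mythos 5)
Your proof is correct and follows essentially the same route as the paper's: identify the Hessian of $g$ with $DG = V^\transpose V$ and conclude PSD-ness from the Gram structure, hence convexity by the second-order criterion. You merely spell out the quadratic-form computation $w^\transpose V^\transpose V w = \norm{Vw}^2 \geq 0$ that the paper leaves implicit in the phrase ``PSD because it is a Gram decomposition.''
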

\begin{proof}
  Since $g$ is smooth, it is sufficient to check that the Hessian of $g$, $\nabla^2 g$ is Positive Semi-Definite (PSD). But since $\nabla g = G$ it follows that
  \[ \nabla^2 g = D[\nabla g] = DG = V^TV \]
  which is PSD because it is a Gram decomposition. Thus $g$ is convex.
\end{proof}

\subsection{Building convex gradients from Gram products}

Given the Gram factorization we explored above, it is tempting to ask the following question: given $G : \R^n \to \R^m$ smooth, does there exist $H: \R^m \to \R^n$ such that 
$DH_{G(x)} = [DG_x]^T $? This question is led by the observation that, by the chain rule, the Jacobian of $H\circ G$ becomes
\[ D(H \circ G)_x = DH_{G(x)}DG_x = [DG_x]^T DG_x\]
so by Proposition 1 \& 2, $H \circ G$ is the gradient of a convex function. In such a case, we say \textit{H convexifies G}. Here are a few examples of these $G,H$ pairs:
\newpage 
\begin{enumerate}
  \item If $G(x) = Ax$ for some matrix $A \in \R^{m \times n}$, then $H$ is given by $H(x) = A^Tx$. 
  \item If $\sigma : \R^n \to \R^n$ is any smooth, invertible and elementwise function, then it's Jacobian is diagonal of the form (recall $x^i$ is the i'th component of $x$)
  \[ D\sigma_x = \begin{pmatrix}\sigma'(x^1) & 0 &\cdots & 0 \\
    0 & \sigma'(x^2) & \cdots & 0\\ \vdots& \vdots&\ddots & 0 \\ 0&0&\cdots &\sigma'(x^n) \end{pmatrix}\]
    so if $G=\sigma$, the $H$ is given by elementwise $\gamma$ such that $\gamma' = \sigma' \circ \sigma^{-1}$
  \item If $A$ is as in 1. and $\sigma$ is as in 2., then $G = \sigma \circ A$ is convexified by $H = A^T \circ \gamma$, where $\gamma$ is such that $\gamma' = \sigma' \circ \sigma^{-1}$. In that case,
  \[ DH_{G(x)}= A^T D\gamma_{G(x)} = A^T D\sigma_{\sigma^{-1} \circ \sigma Ax} = A^T D\sigma_{Ax} =  [DG(x)]^T \]
  this shows that there are non-trivial non-linear solutions.
\end{enumerate}

Unfortunately, composing these solutions further does not yield $G$'s with closed form solutions. The next proposition, however, provides a characterization in the case where $G$ is assumed to be invertible:
\begin{proposition}
  Let $G: \R^n \to \R^n$ be an smooth invertible vector field. Then there exists $H : \R^n \to \R^n$ smooth such that
  \begin{gather*} DH_{G(x)} = [DG_x]^T \qquad \text{  or equivalently  } \qquad DH_{x} = [DG_{G^{-1}(x)}]^T \label{eq:trans} \end{gather*}
  
  if and only if
  \[ \frac{\partial}{\partial x^j} \left( \frac{\partial G_i}{\partial x^k} (G^{-1}(x)) \right) = \frac{\partial}{\partial x^i} \left( \frac{\partial G_j}{\partial x^k} (G^{-1}(x)) \right) \]
  or equivalently, for each 1-form defined by 
  \[ \gamma_k := \sum_{i=1}^n \frac{\partial G_i}{\partial x^k}(G^{-1}) dx^i\] 
  $\gamma_k$ must be closed, i.e $d\gamma_k = 0$. 
  \label{prop3}
\end{proposition}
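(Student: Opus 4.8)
The plan is to reduce the matrix equation $DH_x = [DG_{G^{-1}(x)}]^\transpose$ to $n$ independent scalar potential problems, one per component of $H$, and then apply the same closed-implies-exact machinery already used in Proposition 1. First I would rewrite the target condition entry by entry. With the convention $[DG_y]_{ab} = \partial G_a / \partial x^b(y)$, the $(k,i)$ entry of the desired Jacobian is $[DH_x]_{ki} = \partial H_k/\partial x^i(x)$, while the $(k,i)$ entry of the transpose is $[DG_{G^{-1}(x)}]_{ik} = \partial G_i/\partial x^k(G^{-1}(x))$. Hence the full matrix equation is equivalent to the family of scalar gradient equations
\[ \frac{\partial H_k}{\partial x^i}(x) = \frac{\partial G_i}{\partial x^k}(G^{-1}(x)), \qquad i = 1, \dots, n, \]
for each fixed $k$. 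Recognizing the right-hand side as the coefficients of $\gamma_k$, this says exactly that $dH_k = \gamma_k$, so an $H$ with the prescribed Jacobian exists if and only if each $\gamma_k$ admits a potential $H_k$, i.e.\ iff each $\gamma_k$ is exact. (The two displayed forms of the condition, $DH_{G(x)} = [DG_x]^\transpose$ and $DH_x = [DG_{G^{-1}(x)}]^\transpose$, are related by the substitution $x \mapsto G^{-1}(x)$, valid because $G$ is invertible.)

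Next I would invoke the Poincaré lemma exactly as in Proposition 1. Since the domain is $\R^n$, which is smooth and contractible, a smooth $1$-form is exact iff it is closed. The ``only if'' direction is immediate from $d\circ d = 0$: if $\gamma_k = dH_k$ then $d\gamma_k = d(dH_k) = 0$. The ``if'' direction is the Poincaré lemma, which produces the potential $H_k$, and this can be taken global because $\R^n$ is simply connected. Assembling $H = (H_1, \dots, H_n)$ then yields the desired map; invertibility (and smoothness of $G^{-1}$) is what makes the coefficients $\partial G_i/\partial x^k \circ G^{-1}$ well defined and smooth in the first place.

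Finally I would translate closedness into the stated coordinate identity. Writing $\gamma_k = \sum_i (\gamma_k)_i\, dx^i$ with $(\gamma_k)_i = \partial G_i/\partial x^k(G^{-1})$, the condition $d\gamma_k = 0$ unwinds to $\partial_j (\gamma_k)_i = \partial_i (\gamma_k)_j$ for all $i,j$, which is precisely
\[ \frac{\partial}{\partial x^j}\!\left(\frac{\partial G_i}{\partial x^k}(G^{-1}(x))\right) = \frac{\partial}{\partial x^i}\!\left(\frac{\partial G_j}{\partial x^k}(G^{-1}(x))\right). \]
Imposing this for every $k$ gives the full characterization, establishing both implications simultaneously.

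I expect the only genuine obstacle to be the index bookkeeping around the transpose: one must check carefully that the $k$-th row of $[DG_{G^{-1}(x)}]^\transpose$ is the gradient of a single scalar $H_k$, and that the resulting closedness conditions are exactly those of the forms $\gamma_k$ as defined. Beyond that the conceptual content is identical to Proposition 1, since once the problem is decomposed componentwise the existence question is just closed-versus-exact on $\R^n$.
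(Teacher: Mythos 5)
Your proof is correct and takes essentially the same route as the paper's: both decompose the matrix equation $DH_x = [DG_{G^{-1}(x)}]^\transpose$ row by row into the scalar potential equations $dH_k = \gamma_k$, get the forward direction from $d \circ d = 0$, and get the converse from the Poincar\'e lemma on $\R^n$, assembling $H = (H_1,\dots,H_n)$ at the end. If anything, your write-up is more careful than the paper's, which relabels the recovered potentials as $G_k$ (a notational slip for $H_k$) and has index typos in its expansion of the closedness condition.
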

The proof is given in \ref{apx:proof3}, along with a discussion of connections to Brenier's Polar Factorization theorem.

\subsection{Gram products and integration}
\label{sec:convex-int}
In general, even if we can verify that an invertible $G$ satisfies Proposition 3, there is little hope of finding a closed form for $H$. But given that we know if such an $H$ exists, it must satisfy 
\begin{gather} D(H \circ G)_{x} = [DG_x]^T DG_x \label{eq:cvx}\end{gather}

For our purposes, we are interested in computing the composition $H \circ G$, so we will proceed by integrating the right hand side. The following definition makes this precise, and generalizes to the case where $G$ is not assumed invertible:

\begin{definition}
\label{def:cvxific}
  Let $G : \R^n \to \R^m$ be a smooth vector field. Assume $G$ satisfies the following partial differential equation:
  \begin{gather} \frac{\partial^2 G}{\partial x^k \partial x^i} \cdot \frac{\partial G}{\partial x^j} = \frac{\partial^2 G}{\partial x^k \partial x^j} \cdot \frac{\partial G}{\partial x^i} \qquad \forall 1 \leq i,j,k \leq n
  \label{eq:pde} \end{gather}
  where the $\cdot$ is a Euclidean dot-product, $\frac{\partial G}{\partial x^i}$ is a vector derivative (i.e a column of the Jacobian).
  We define the convexification of $G$ to be
  \begin{gather} F(x) = \int_0^1 [DG_{sx}]^TDG_{sx} x ds \qquad F:\R^n \to \R^n \label{eq:int_model}\end{gather}
  \label{def1}
\end{definition}
The utility of this definition is made clear in the following proposition:
\begin{theorem}
\label{thm:jacfac}
  The Jacobian of $F$, $DF$ takes the form 
  \[ DF = [DG]^T DG \]
  by Theorem-\ref{thm:cvx_pot}, this implies there exists a convex $\varphi : \R^n \to \R$ such that $F = \nabla \varphi$.
\end{theorem}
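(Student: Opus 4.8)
The plan is to prove this by the explicit homotopy (Poincaré-lemma) construction: the radial integral in \eqref{eq:int_model} is precisely the operator that recovers a vector field from a prescribed symmetric Jacobian, and the hypothesis \eqref{eq:pde} is exactly the integrability condition that makes this work. Write $M(x) := [DG_x]^\transpose DG_x$, a smooth symmetric $n\times n$ matrix field whose entries are $M_{ik}(x) = \frac{\partial G}{\partial x^i}\cdot\frac{\partial G}{\partial x^k}$. The goal is to show $DF_x = M(x)$; the convexity conclusion then follows immediately from Theorem~\ref{thm:cvx_pot} applied with $V = DG$.

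The first --- and I expect the most delicate --- step is to translate \eqref{eq:pde} into a statement about $M$. The claim $DF = M$ forces each component $F_i$ to be a potential for the $i$-th row of $M$, so the natural obstruction is whether each ``row one-form'' $\sum_k M_{ik}\,dx^k$ is closed, i.e. whether $\partial_j M_{ik} = \partial_k M_{ij}$ for all $i,j,k$. I would verify this by differentiating $M_{ik} = \sum_a \partial_i G_a\,\partial_k G_a$ and using the symmetry of mixed partials $\partial_{jk}G_a = \partial_{kj}G_a$: the two terms carrying the factor $\partial_i G_a$ cancel, leaving $\partial_j M_{ik} - \partial_k M_{ij} = \sum_a(\partial_{ij}G_a\,\partial_k G_a - \partial_{ik}G_a\,\partial_j G_a) = \frac{\partial^2 G}{\partial x^i\partial x^j}\cdot\frac{\partial G}{\partial x^k} - \frac{\partial^2 G}{\partial x^i\partial x^k}\cdot\frac{\partial G}{\partial x^j}$. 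This difference vanishes exactly when \eqref{eq:pde} holds (after relabelling indices and noting the scalars $\frac{\partial^2 G}{\partial x^a\partial x^b}\cdot\frac{\partial G}{\partial x^c}$ are already symmetric in $a,b$, so both conditions just assert full symmetry of that three-index object). Getting this index bookkeeping right is the crux of the argument.

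With integrability in hand, the remainder is a direct differentiation under the integral sign, justified by smoothness of $G$ on the compact segment $\{sx : s\in[0,1]\}$. Differentiating $F_i(x) = \int_0^1 \sum_k M_{ik}(sx)\,x^k\,ds$ in $x^j$ gives $\partial_j F_i(x) = \int_0^1\big(M_{ij}(sx) + s\textstyle\sum_k \partial_j M_{ik}(sx)\,x^k\big)\,ds$. Then I would apply the integrability identity to replace $\partial_j M_{ik}$ by $\partial_k M_{ij}$, recognize $\sum_k \partial_k M_{ij}(sx)\,x^k = \tfrac{d}{ds}M_{ij}(sx)$, and collect terms into the total derivative $M_{ij}(sx) + s\tfrac{d}{ds}M_{ij}(sx) = \tfrac{d}{ds}\big(s\,M_{ij}(sx)\big)$. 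The fundamental theorem of calculus then yields $\partial_j F_i(x) = \big[s\,M_{ij}(sx)\big]_{s=0}^{s=1} = M_{ij}(x)$, i.e. $DF = [DG]^\transpose DG$, completing the proof and invoking Theorem~\ref{thm:cvx_pot} for the convex potential $\varphi$.
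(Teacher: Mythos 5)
Your proof is correct, and its first half --- showing that \eqref{eq:pde} is precisely the closedness condition $\partial_j M_{ik} = \partial_k M_{ij}$ for the rows of $M = [DG]^\transpose DG$ --- is the same key step as the paper's, just written in coordinates rather than via the paper's exterior-derivative computation of $d\omega_k$. Where you genuinely diverge is in the second half: the paper, having established closedness, invokes the Poincar\'e lemma abstractly to obtain potentials $F_k$ with $dF_k = \omega_k$, and then applies Stokes' theorem (the gradient theorem) along the ray $s \mapsto sx$ to identify the integral \eqref{eq:int_model} with $F_k(x) - F_k(0)$; you instead differentiate under the integral sign and use the closedness identity to recognize the integrand as the total derivative $\tfrac{d}{ds}\bigl(s\,M_{ij}(sx)\bigr)$, so the fundamental theorem of calculus gives $DF = M$ directly. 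Your route is essentially the explicit radial-homotopy proof of the Poincar\'e lemma inlined into the argument: it is more self-contained (no appeal to the Poincar\'e lemma or Stokes as black boxes), it produces $DF = M$ without ever naming the potentials, and it makes the regularity justification (differentiation under the integral over a compact segment) explicit, which the paper glosses over. The paper's version is shorter and retains the differential-forms viewpoint that motivated the PDE in the first place. Both are valid; your index bookkeeping --- matching $\partial_j M_{ik} - \partial_k M_{ij}$ to the symmetry in the last two indices of the three-index object $\frac{\partial^2 G}{\partial x^a \partial x^b} \cdot \frac{\partial G}{\partial x^c}$, which together with symmetry of mixed partials in $a,b$ is equivalent to \eqref{eq:pde} --- checks out.
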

For the proof see Appendix-\ref{apx:proofthm2}. Lastly we give an important structural result -- a weaker version of the characterization we had in the case where $G$ was invertible, but still sufficient for our needs:
\begin{theorem}
\label{thm:exist}
  Given $G : \R^n \to \R^m$ smooth, if there exists a smooth $H :\R^m \to \R^n$ that convexifies $G$ (in the sense of \ref{def:cvxific}), then $H \circ G = F$ as defined above, i.e
  \[ H \circ G (x) = F(x) = \int_0^1 [DG_{sx}]^TDG_{sx} x ds  \]
\end{theorem}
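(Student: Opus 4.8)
The plan is to show that $H \circ G$ and $F$ share the same derivative everywhere and then pin down the single remaining constant by evaluating at the origin; concretely, I would realize $H \circ G$ as the radial line integral of a vector field along the straight ray $s \mapsto sx$ and invoke the fundamental theorem of calculus. The starting point is the convexification hypothesis itself: that $H$ convexifies $G$ means $DH_{G(x)} = [DG_x]^T$, so by the chain rule
\[ D(H \circ G)_x = DH_{G(x)}\, DG_x = [DG_x]^T DG_x. \]
This is exactly the matrix field whose radial integral defines $F$ in \eqref{eq:int_model}, and by Theorem~\ref{thm:jacfac} it also equals $DF_x$, so the two maps are candidates to coincide.

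First I would fix $x \in \R^n$ and introduce the auxiliary curve $\phi(s) := (H \circ G)(sx)$ for $s \in [0,1]$. Differentiating through the chain rule gives $\phi'(s) = D(H\circ G)_{sx}\, x = [DG_{sx}]^T DG_{sx}\, x$, which is precisely the integrand appearing in \eqref{eq:int_model}. Applying the fundamental theorem of calculus along this segment then yields
\[ (H\circ G)(x) - (H \circ G)(0) = \int_0^1 \phi'(s)\, ds = \int_0^1 [DG_{sx}]^T DG_{sx}\, x \, ds = F(x), \]
so that $H \circ G$ and $F$ differ only by the constant vector $(H\circ G)(0) = H(G(0))$. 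Since the integrand is smooth in $s$, the interchange implicit in the FTC step is unproblematic, so I expect no technical friction here.

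The one place that requires care -- and the main obstacle -- is this additive constant. Because $H$ is determined by its Jacobian only up to an additive constant (any $H + c$ again convexifies $G$), the honest reading of the conclusion is that $H \circ G$ equals $F$ up to a constant; equality on the nose holds under the normalization $H(G(0)) = 0$, which dovetails with the fact that $F(0) = \int_0^1 [DG_0]^T DG_0\, 0\, ds = 0$ by construction. I would therefore either impose this normalization explicitly or state the result modulo constants. An equivalent route avoiding the explicit FTC computation is to combine Theorem~\ref{thm:jacfac} with the chain-rule identity above to get $D(H\circ G) = DF$ on the connected domain $\R^n$, conclude that $H\circ G - F$ is constant, and evaluate at $x = 0$; I prefer the FTC argument since it is self-contained and exhibits the constant directly.
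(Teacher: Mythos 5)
Your proposal is correct and matches the paper's own argument: the paper likewise uses the chain rule to identify $D(H\circ G)_x = [DG_x]^T DG_x$ and then integrates along the ray $s \mapsto sx$ (invoking Stokes' theorem, i.e.\ the fundamental theorem of calculus in your phrasing) to obtain $H\circ G(x) - H\circ G(0) = F(x)$. Your careful treatment of the additive constant corresponds exactly to the paper's ``without loss of generality assume $H \circ G(0) = 0$,'' so the two proofs are essentially identical.
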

This result can be interpreted as: \textit{if a solution exists, this integral will find it.} For the proof see Appendix-\ref{apx:proof3}.


\section{Input Convex Gradient Networks}\label{sec:ICGN}
Given a Neural Network $M_\theta:\R^n \to \R^m$ with smooth activations, we can apply the transformation \eqref{eq:int_model} to it. Using automatic differentiation, we can efficiently compute the Jacobian-vector and Jacobian-transpose-vector products required for the integrand. In this case we write 
\begin{gather} N_\theta(x) = \int_0^1 [D(M_\theta)_{sx}]^T D(M_\theta)_{sx} x ds \end{gather}

When the explicit model $M_\theta$ satisfies \eqref{eq:pde}, we refer to the implicit model, $N_\theta$, as an \textit{Input Convex Gradient Network} (ICGN). A few details are in order:
\begin{itemize}
  \item To evaluate $N_\theta(x)$ for a point $x$, we numerically compute the line integral. When considering which quadrature method to use, we have a special constraint --  we need the numerical estimator to be non-deterministic, ruling out methods like Gaussian Quadrature or Simpson's rule. 
  This is required because the optimizer can learn to modify the network in ways that take advantage of the estimator using the same fixed points.
  
  
  \item We can have $m > n$ for $M_\theta :\R^n \to \R^m$ allowing us to augment the output dimension, which we can use to increase expressitivity of the network. Additionally, since we are fixing the line integral's path, we are implicitly enforcing $F(0) = 0$. We could also add an explicit constant for $F(0)$ as a parameter.
  
  \item In general, an arbitrary network $M_\theta$ will not satisfy the PDE \eqref{eq:pde}.
  However, $M_\theta$ satisfies \eqref{eq:pde} in the special case where $M_\theta(x) = \sigma (Ax + b)$ -- i.e., a 0 or 1 layer ICGN depending on interpretation.
  
  However, in this one layer case, because we know $H$ in closed form corresponding to $M_\theta$, the integral is really a proof of concept.
  Still, we believe it is possible to design constraints for deeper networks so that they satisfy \eqref{eq:pde}.
  In this case the integral is necessary, but we defer this discussion to \ref{sec:open}.
  
  \item We can also work with explicit models $M_\theta$ without proving they satisfy \eqref{eq:pde}. Since the path in \ref{eq:int_model} is fixed, the output of the implicit model $N_\theta$ remains well defined. Although we lose the closedness and convexity guarantees, for practical applications they might not be necessary. The result is still a highly compact and expressive model, similar to what was first formulated by \citet{lorraine2019jacnet}.
\end{itemize}

\newpage
\section{Experiments}
\label{sec:expr}
We include a Google Colaboratory notebook for reproduction of the experiments \href{https://colab.research.google.com/drive/1BvFXpouwmZtusB_bJKbBRuCqlcozbWva?usp=sharing}{{\color{blue}here}}.

\subsection{A polynomial example}\label{exp:poly_example}
This experiment compares the sizes of an ICNN and ICGN needed to approximate a given convex gradient.
We approximate the following map as our toy example:
\begin{equation}\label{eq:toy}
    T(x,y) = \begin{pmatrix}4x^3 + \frac{1}{2} y + x \\ 3y - y^2 + \frac{1}{2} x \end{pmatrix}
\end{equation}
This map is the gradient of the function $\varphi(x,y) = x^4 + \frac{x^2}{2} + \frac{xy}{2} + \frac{3y^2}{2} - \frac{y^3}{3}$, which is convex on $[0,1]^2$.
In Figure~\ref{fig1} we display the error between an ICGN with few parameters and a ICNN with many more parameters.
The ICGN has 15 parameters -- it is extremely small, with 0 hidden layers i.e $M_\theta = \sigma(Ax + b)$ and an output dimension of 5.
The ICNN does not learn reasonable functions with 0 hidden layers, so we display a 1-layer ICNN.
We did not find the number of hidden units strongly impacted the 1-layer ICNN performance -- we use 25 hidden units in Figure~\ref{fig1} for 78 total parameters.
By adding a second layer, the ICNN was able to fit the function correctly.

\textbf{Takeaway:} Our model -- the ICGN -- effectively approximates the target map $T$ with far fewer parameters than the ICNN.

\begin{figure}
  \center
  \includegraphics[width=1.0\textwidth]{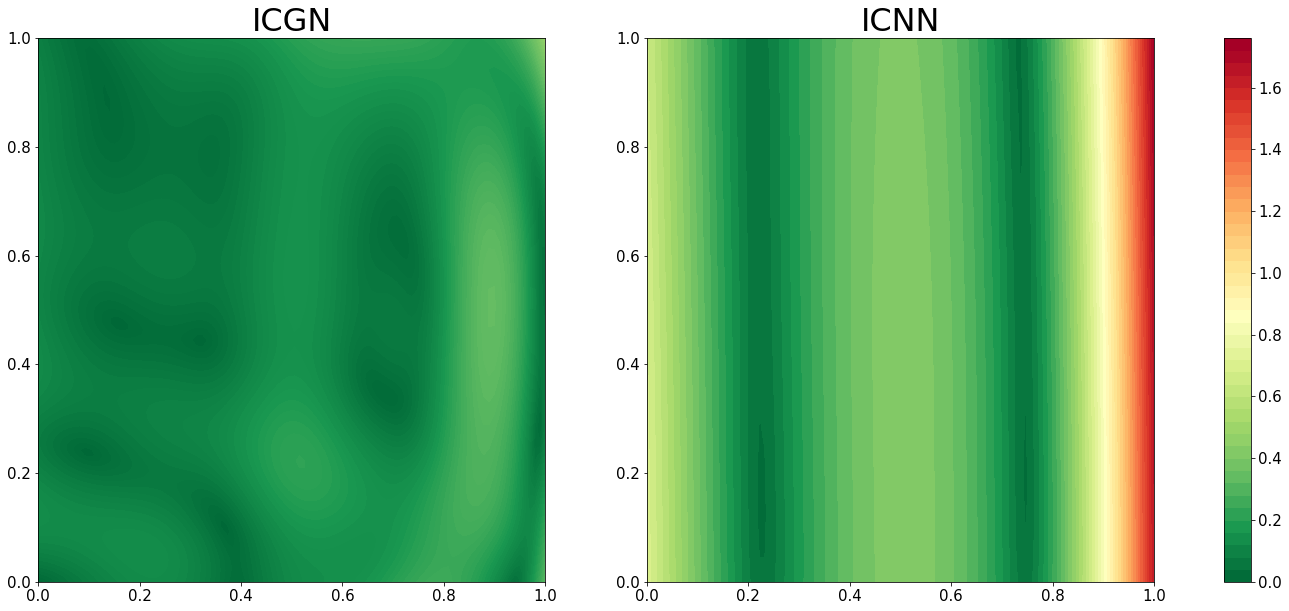}
  \caption{
    We compare methods for learning convex gradients on the map in Equation~\ref{eq:toy}.
    Our method -- the ICGN -- is able to effectively approximate the map with very few parameters, while the moderately sized ICNN struggles. Here the colorscale represents the $L2$ error between the target and the output of each model.
  }
  \label{fig1}
\end{figure}

\section{Future Directions}
\label{sec:open}

Here we list open questions as as interesting directions for future work:
\begin{itemize}
    \item How can we parameterize deeper networks for $M_\theta$ while still ensuring they solve \eqref{eq:pde}? In \S-\ref{sec:open_beltrami} we discuss a connection between our model's structure and Riemannian metric tensors, which we hope might guide our search for a solution.
    \item If using more layers isn't practical (i.e., composition), is there another method of generating more expressive models that still solve \eqref{eq:pde}? We explore the this idea in Appendix~\ref{apx:additive}.
    \item Can we use a different product structure than the Gram product (i.e $V^T V$)? Other options could be to use either a Hadamard product, or a Kronecker product. We explore this in Appendix~\ref{apx:prod}.
    \item Are there applications for this implicit-integration model besides convexity?
    For example, diffusion modelling.
    We believe there is unexplored potential in using integration for modeling.
\end{itemize}

\subsection{Deeper networks and Pullback Metrics}
\label{sec:open_beltrami}
The ICGNs Achilles heel is currently that although a 1-layer network satisfies \eqref{eq:pde}, a network with more layers does not. Still, one layer working at all suggests that it might be possible to extend to deeper networks, given the right constraints on the layers.
Choosing these constraints, however, could be difficult.

Fortunately, the integrand we constructed in the definition of \eqref{eq:int_model}, $[DG]^T DG$, has been analyzed before, and no doubt is well known to geometers -- it is the \textit{pullback} of the flat metric on $\R^n$ via $G$, which would usually be denoted $G^* I$, where $I$ denotes the standard Euclidean metric, i.e $I(v,w) = v^Tw = v \cdot w$, which is represented as a matrix in canonical coordinates is the identity. So asking whether $G$ satisfies the PDE \eqref{eq:pde} is equivalent to asking, \textit{when is the pullback metric $G^*I$ given by a convex potential?} I.e, when does there exists a convex $\varphi$ such that $\nabla^2 \varphi = G^* I$, where $\nabla^2$ is the Riemannian Hessian \cite{lee2019riemann} on a manifold $(M,g)$. 

This was studied before in \citet{shima1976metrichess}, who gave conditions for a Riemannian manifold $(M,g)$ such that the metric tensor is the Hessian $\nabla^2$ of some $\varphi : M \to \R$, where $\nabla$ is a given flat connection. This can be interpreted as the real valued case of a \textit{Kahler metric}. Unfortunately, these conditions do not give a simple answer as to how we should constrain the layers of $M_\theta$ such that the condition \eqref{eq:pde} holds, but nonetheless they provide further evidence that it should be possible. We hope to explore this in future work. 


\section{Conclusion}
    We introduced a method for modelling convex gradients by integrating Jacobian-vector products parameterized by a neural network -- the Input Convex Gradient Network (ICGN).
    We provided theoretical results characterizing what we know about how different flavours of our method will work.
    We demonstrated initial empirical results with our method showing that we can learn non-trivial vector fields with fewer parameters than competing methods.
    Finally, we presented various exciting directions for future work.

    
    
    
    
    


    \newpage
        \bibliography{references}

\begin{thebibliography}{33}
\providecommand{\natexlab}[1]{#1}
\providecommand{\url}[1]{\texttt{#1}}
\expandafter\ifx\csname urlstyle\endcsname\relax
  \providecommand{\doi}[1]{doi: #1}\else
  \providecommand{\doi}{doi: \begingroup \urlstyle{rm}\Url}\fi

\bibitem[Genevay et~al.(2017)Genevay, Peyré, and Cuturi]{genevay2017learning}
Aude Genevay, Gabriel Peyré, and Marco Cuturi.
\newblock Learning generative models with sinkhorn divergences, 2017.

\bibitem[Salimans et~al.(2018)Salimans, Zhang, Radford, and
  Metaxas]{salimans2018improving}
Tim Salimans, Han Zhang, Alec Radford, and Dimitris Metaxas.
\newblock Improving gans using optimal transport, 2018.

\bibitem[Arjovsky et~al.(2017)Arjovsky, Chintala, and
  Bottou]{arjovsky2017wasserstein}
Martin Arjovsky, Soumith Chintala, and Léon Bottou.
\newblock Wasserstein gan, 2017.

\bibitem[Gulrajani et~al.(2017)Gulrajani, Ahmed, Arjovsky, Dumoulin, and
  Courville]{gulrajani2017improved}
Ishaan Gulrajani, Faruk Ahmed, Martin Arjovsky, Vincent Dumoulin, and Aaron
  Courville.
\newblock Improved training of wasserstein gans, 2017.

\bibitem[Gemici et~al.(2018)Gemici, Akata, and Welling]{gemici2018primaldual}
Mevlana Gemici, Zeynep Akata, and Max Welling.
\newblock Primal-dual wasserstein gan, 2018.

\bibitem[Brenier(1991)]{brenier1991polar}
Yann Brenier.
\newblock Polar factorization and monotone rearrangement of vector-valued
  functions.
\newblock \emph{Communications on pure and applied mathematics}, 44\penalty0
  (4):\penalty0 375--417, 1991.

\bibitem[Villani(2003)]{villani2003topics}
C.~Villani.
\newblock \emph{Topics in Optimal Transportation}.
\newblock Graduate studies in mathematics. American Mathematical Society, 2003.
\newblock ISBN 9780821833124.

\bibitem[Amos et~al.(2017)Amos, Xu, and Kolter]{amos2017input}
Brandon Amos, Lei Xu, and J~Zico Kolter.
\newblock Input convex neural networks.
\newblock In \emph{International Conference on Machine Learning}, pages
  146--155. PMLR, 2017.

\bibitem[Huang et~al.(2020{\natexlab{a}})Huang, Chen, Tsirigotis, and
  Courville]{huang2021flows}
Chin{-}Wei Huang, Ricky T.~Q. Chen, Christos Tsirigotis, and Aaron~C.
  Courville.
\newblock Convex potential flows: Universal probability distributions with
  optimal transport and convex optimization.
\newblock \emph{CoRR}, abs/2012.05942, 2020{\natexlab{a}}.
\newblock URL \url{https://arxiv.org/abs/2012.05942}.

\bibitem[Makkuva et~al.(2020)Makkuva, Taghvaei, Oh, and
  Lee]{makkuva2020optimal}
Ashok Makkuva, Amirhossein Taghvaei, Sewoong Oh, and Jason Lee.
\newblock Optimal transport mapping via input convex neural networks.
\newblock In \emph{International Conference on Machine Learning}, pages
  6672--6681. PMLR, 2020.

\bibitem[Korotin et~al.(2019)Korotin, Egiazarian, Asadulaev, Safin, and
  Burnaev]{korotin2019wasserstein}
Alexander Korotin, Vage Egiazarian, Arip Asadulaev, Alexander Safin, and Evgeny
  Burnaev.
\newblock Wasserstein-2 generative networks.
\newblock \emph{arXiv preprint arXiv:1909.13082}, 2019.

\bibitem[Saremi(2019)]{saremi2019approximating}
Saeed Saremi.
\newblock On approximating $\nabla f$ with neural networks.
\newblock \emph{arXiv preprint arXiv:1910.12744}, 2019.

\bibitem[Metz et~al.(2021)Metz, Freeman, Schoenholz, and
  Kachman]{metz2021gradients}
Luke Metz, C~Daniel Freeman, Samuel~S Schoenholz, and Tal Kachman.
\newblock Gradients are not all you need.
\newblock \emph{arXiv preprint arXiv:2111.05803}, 2021.

\bibitem[Chen et~al.(2018)Chen, Rubanova, Bettencourt, and
  Duvenaud]{chen2018neural}
Ricky~TQ Chen, Yulia Rubanova, Jesse Bettencourt, and David Duvenaud.
\newblock Neural ordinary differential equations.
\newblock In \emph{Proceedings of the 32nd International Conference on Neural
  Information Processing Systems}, pages 6572--6583, 2018.

\bibitem[Bai et~al.(2019{\natexlab{a}})Bai, Kolter, and Koltun]{bai2019deq}
Shaojie Bai, J.~Zico Kolter, and Vladlen Koltun.
\newblock Deep equilibrium models.
\newblock \emph{CoRR}, abs/1909.01377, 2019{\natexlab{a}}.
\newblock URL \url{http://arxiv.org/abs/1909.01377}.

\bibitem[Griewank and Walther(2008)]{griewank2008evaluating}
Andreas Griewank and Andrea Walther.
\newblock \emph{Evaluating derivatives: principles and techniques of
  algorithmic differentiation}.
\newblock SIAM, 2008.

\bibitem[Anil et~al.(2019)Anil, Lucas, and Grosse]{anil2019sorting}
Cem Anil, James Lucas, and Roger Grosse.
\newblock Sorting out lipschitz function approximation.
\newblock In \emph{International Conference on Machine Learning}, pages
  291--301. PMLR, 2019.

\bibitem[Pitis et~al.(2020)Pitis, Chan, Jamali, and Ba]{pitis2020inductive}
Silviu Pitis, Harris Chan, Kiarash Jamali, and Jimmy Ba.
\newblock An inductive bias for distances: Neural nets that respect the
  triangle inequality.
\newblock \emph{arXiv preprint arXiv:2002.05825}, 2020.

\bibitem[Czarnecki et~al.(2017)Czarnecki, Osindero, Jaderberg, {\'S}wirszcz,
  and Pascanu]{czarnecki2017sobolev}
Wojciech~Marian Czarnecki, Simon Osindero, Max Jaderberg, Grzegorz
  {\'S}wirszcz, and Razvan Pascanu.
\newblock Sobolev training for neural networks.
\newblock \emph{arXiv preprint arXiv:1706.04859}, 2017.

\bibitem[Bai et~al.(2019{\natexlab{b}})Bai, Kolter, and Koltun]{bai2019deep}
Shaojie Bai, J~Zico Kolter, and Vladlen Koltun.
\newblock Deep equilibrium models.
\newblock \emph{arXiv preprint arXiv:1909.01377}, 2019{\natexlab{b}}.

\bibitem[Chen et~al.(2020)Chen, Amos, and Nickel]{chen2020neural}
Ricky~TQ Chen, Brandon Amos, and Maximilian Nickel.
\newblock Neural spatio-temporal point processes.
\newblock \emph{arXiv preprint arXiv:2011.04583}, 2020.

\bibitem[Lorraine and Hossain(2019)]{lorraine2019jacnet}
Jonathan Lorraine and Safwan Hossain.
\newblock Jacnet: Learning functions with structured jacobians.
\newblock \emph{ICML INNF Workshop}, 2019.

\bibitem[Huang et~al.(2018)Huang, Krueger, Lacoste, and
  Courville]{huang2018naf}
Chin-Wei Huang, David Krueger, Alexandre Lacoste, and Aaron Courville.
\newblock Neural autoregressive flows, 2018.

\bibitem[Huang et~al.(2020{\natexlab{b}})Huang, Chen, Tsirigotis, and
  Courville]{huang2020convex}
Chin-Wei Huang, Ricky~TQ Chen, Christos Tsirigotis, and Aaron Courville.
\newblock Convex potential flows: Universal probability distributions with
  optimal transport and convex optimization.
\newblock \emph{arXiv preprint arXiv:2012.05942}, 2020{\natexlab{b}}.

\bibitem[Alvarez-Melis et~al.(2021)Alvarez-Melis, Schiff, and
  Mroueh]{alvarez2021optimizing}
David Alvarez-Melis, Yair Schiff, and Youssef Mroueh.
\newblock Optimizing functionals on the space of probabilities with input
  convex neural networks.
\newblock \emph{arXiv preprint arXiv:2106.00774}, 2021.

\bibitem[Mokrov et~al.(2021)Mokrov, Korotin, Li, Genevay, Solomon, and
  Burnaev]{mokrov2021largescale}
Petr Mokrov, Alexander Korotin, Lingxiao Li, Aude Genevay, Justin Solomon, and
  Evgeny Burnaev.
\newblock Large-scale wasserstein gradient flows, 2021.

\bibitem[Jordan et~al.(1998)Jordan, Kinderlehrer, and
  Otto]{jordan1998variational}
Richard Jordan, David Kinderlehrer, and Felix Otto.
\newblock The variational formulation of the fokker--planck equation.
\newblock \emph{SIAM journal on mathematical analysis}, 29\penalty0
  (1):\penalty0 1--17, 1998.

\bibitem[Fan et~al.(2021)Fan, Taghvaei, and Chen]{fan2021scalable}
Jiaojiao Fan, Amirhossein Taghvaei, and Yongxin Chen.
\newblock Scalable computations of wasserstein barycenter via input convex
  neural networks, 2021.

\bibitem[Agueh and Carlier(2011)]{agueh2011barycenters}
Martial Agueh and Guillaume Carlier.
\newblock Barycenters in the wasserstein space.
\newblock \emph{SIAM Journal on Mathematical Analysis}, 43\penalty0
  (2):\penalty0 904--924, 2011.

\bibitem[Lee(2013)]{lee2013introduction}
J.M. Lee.
\newblock \emph{Introduction to Smooth Manifolds}.
\newblock Graduate Texts in Mathematics. Springer New York, 2013.
\newblock ISBN 9780387217529.

\bibitem[Lee(2019)]{lee2019riemann}
J.M. Lee.
\newblock \emph{Introduction to Riemannian Manifolds}.
\newblock Graduate Texts in Mathematics. Springer International Publishing,
  2019.
\newblock ISBN 9783319917559.
\newblock URL \url{https://books.google.ca/books?id=VUOCDwAAQBAJ}.

\bibitem[Shima(1976)]{shima1976metrichess}
Hirohiko Shima.
\newblock {On certain locally flat homogeneous manifolds of solvable Lie
  groups}.
\newblock \emph{Osaka Journal of Mathematics}, 13\penalty0 (2):\penalty0 213 --
  229, 1976.
\newblock \doi{ojm/1200769511}.
\newblock URL \url{https://doi.org/}.

\bibitem[Horn and Johnson(1994)]{horn1994matrix}
R.A. Horn and C.R. Johnson.
\newblock \emph{Topics in Matrix Analysis}.
\newblock Cambridge University Press, 1994.
\newblock ISBN 9781107392953.
\newblock URL \url{https://books.google.ca/books?id=ukd0AgAAQBAJ}.

\end{thebibliography}
    \clearpage
    \newpage

    \appendix
    

\section{Proof of Prop 3}
\label{apx:proof3}
Here we give the proof of proposition 3:

\begin{proof}
\noindent
\text{ }\\
\vspace{6 pt}
$\implies$ \\
Assume there exists $H$ that satisfies \eqref{eq:trans}. Then since each row of $DH$ is the gradient of a component of $H$, they are exact as 1-forms. Thus 
\[ \gamma_k = \sum_{i=1}^n \frac{\partial G_i(G^{-1})}{\partial x^k} dx^i = \sum_{i=1}^n \frac{\partial H_k}{\partial x^i} dx^i \implies d\gamma_k = 0\]
and by Prop in \citet{lee2013introduction}, the 1-forms $\gamma_k$ are exact on $\R^d$ if and only if in coordinates,
\[ \frac{\partial \gamma_k^j}{\partial x^i}  - \frac{\partial \gamma_k^i}{\partial x^j} = 0 : \forall i,j \]
where $\gamma_k^i$ denotes the $i$th coefficient (function) of $\gamma_k$. Expanding this expression yields
\[ \frac{\partial }{\partial x^i}\left( \frac{\partial G_i(G^{-1})}{\partial x^k} \right)  - \frac{\partial} {\partial x^j}\left( \frac{\partial G_i(G^{-1})}{\partial x^k} \right) = 0 \]
\vspace{6 pt}
\text{} \\ 

$\impliedby$ 
Conversely, assume the 1-forms are closed for all $k$. By the Poincare lemma \cite{lee2013introduction}, closed 1-forms on $\R^d : d \geq 2$ are exact, so there exist 0-forms, $G_k$ such that $dG_k = \nabla G_k = \gamma_k$. But then setting 
\[ G = \begin{pmatrix} G_1 \\ \vdots \\ G_d \end{pmatrix}\]
we have 
\[ DG = [DG_{G^{-1}(x)}]^T \]
as desired.

\end{proof}

In the case where $G$ is not invertible, however, it is harder to say whether such an $H$ exists. We leave exploring this to future work.

\subsection{Note about polar factorization theorem}
Brenier's polar factorization theorem\cite{brenier1991polar}  tells us that 
\begin{theorem}
If $G \in L^2(\R^d ; \R^d)$ is a vector valued $L_2$ mapping, and
\[ \lambda, \nu \in \mathcal{P}_2(\R^d) \qquad \mu := G_\# \lambda \]
are two probability measures with finite second moments, then there exists a convex function $\varphi$ and a map $S : \R^d \to \R^d$ such that
\[ G \circ S =  \nabla \varphi \qquad (\nabla \varphi)_\#(\nu) = \mu \qquad S_\#(\lambda) = \nu \]
and S is the unique $L_2$ projection of $G$ onto the set of maps that pushforward $\lambda$ onto $\nu$ i.e 
\[ S = \argmin{\sigma \in S(\lambda,\nu)} \int_{\R^d} |\sigma(x) - H(x)|^2 \qquad S(\lambda, \nu) = \{ \sigma \in L^2(\R^d;\R^d) | \sigma_\#(\lambda) = \nu \}\]
\end{theorem}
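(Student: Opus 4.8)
The plan is to reduce the factorization to a single application of Brenier's optimal transport theorem together with Fenchel--Young duality, rather than to reprove transport existence from scratch. The three pushforward constraints are consistent with the factorization being read as $G = (\nabla\varphi)\circ S$, so I would produce a convex $\varphi$ and a map $S$ realizing that identity and then verify the $L^2$-projection characterization. The role of the hypotheses $\lambda,\nu\in\mathcal{P}_2(\R^d)$ and $G\in L^2$ is to guarantee that all the integrals below converge and that the Brenier map is well defined.

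First I would invoke Brenier's theorem on the quadratic-cost transport problem sending $\nu$ to $\mu := G_\#\lambda$. Assuming $\nu$ is absolutely continuous with respect to Lebesgue measure (the standard hypothesis under which the Brenier map exists and is unique), this yields a convex $\varphi$ with $(\nabla\varphi)_\#\nu = \mu$, which is exactly the second stated condition. Passing to the convex conjugate $\varphi^*$ furnishes the $\mu$-a.e. inverse $\nabla\varphi^* = (\nabla\varphi)^{-1}$, so I can define $S := (\nabla\varphi^*)\circ G$. A change of variables gives $S_\#\lambda = (\nabla\varphi^*)_\#(G_\#\lambda) = (\nabla\varphi^*)_\#\mu = \nu$, establishing the third condition, and composing yields $(\nabla\varphi)\circ S = (\nabla\varphi)\circ(\nabla\varphi^*)\circ G = G$, recovering the factorization.

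It then remains to show $S$ is the unique $L^2(\lambda)$-projection of $G$ onto $S(\lambda,\nu)$. For any competitor $\sigma$ with $\sigma_\#\lambda = \nu$, I would expand $\int|G-\sigma|^2\,d\lambda = \int|G|^2\,d\lambda - 2\int G\cdot\sigma\,d\lambda + \int|\sigma|^2\,d\lambda$ and note, via change of variables, that $\int|\sigma|^2\,d\lambda = \int|y|^2\,d\nu(y)$ depends only on $\nu$; hence the first and third terms are independent of $\sigma$ and minimizing the distance is equivalent to maximizing $\int G\cdot\sigma\,d\lambda$. The Fenchel--Young inequality $\sigma\cdot G \le \varphi(\sigma) + \varphi^*(G)$ integrates, using $\int\varphi(\sigma)\,d\lambda = \int\varphi\,d\nu$, to a bound whose right-hand side is independent of $\sigma$, with equality exactly when $\sigma = \nabla\varphi^*(G) = S$ holds $\lambda$-a.e. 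This identifies $S$ as the unique maximizer, hence the unique projection.

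The main obstacle is not the duality bookkeeping but the analytic input underlying it: the existence, uniqueness, and a.e. invertibility of the Brenier map $\nabla\varphi$, which is precisely where absolute-continuity assumptions on $\nu$ (and, for the inverse relation $\nabla\varphi\circ\nabla\varphi^* = \mathrm{id}$, on $\mu$) become indispensable. A secondary technical point is verifying that $\varphi$ and $\varphi^*$ are integrable against the relevant measures under the finite-second-moment hypotheses, so that the Fenchel--Young integration and the equality case are genuinely justified rather than formal.
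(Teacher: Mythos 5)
The paper itself gives no proof of this statement: it is quoted, with citation, as Brenier's polar factorization theorem~\cite{brenier1991polar}, purely to set up an analogy with the paper's notion of convexification. So there is no in-paper argument to compare against. Your proposal is, in essence, the standard proof from the optimal transport literature (cf.\ the presentation of polar factorization in \cite{villani2003topics}): construct a Brenier map between $\nu$ and $\mu = G_\#\lambda$, compose with its Legendre conjugate to produce $S$, and characterize the projection by expanding the square and applying Fenchel--Young to the cross term. Your silent repair of the statement is also warranted: as printed, $G \circ S = \nabla\varphi$ is inconsistent with the constraints $S_\#\lambda = \nu$ and $(\nabla\varphi)_\#\nu = \mu$; the consistent (and classical) form is $G = (\nabla\varphi)\circ S$, which you adopted, and the stray $H$ in the projection formula should be $G$, with the integral taken against $\lambda$.

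Two substantive remarks. First, the absolute-continuity caveats you flag are not optional polish: without some non-degeneracy hypothesis the statement is false as written. If $G$ is constant, then $\mu = \delta_c$, $\varphi^*$ is differentiable nowhere on the support of $\mu$, your map $S = \nabla\varphi^*\circ G$ is undefined, and every $\sigma \in S(\lambda,\nu)$ attains the same $L^2$ distance to $G$ (all three terms in your expansion are determined by $\lambda$, $\nu$, and $G$ alone), so the ``unique projection'' clause fails. The classical theorem assumes $\mu = G_\#\lambda$ is absolutely continuous, and in Brenier's setting the analogue of $\nu$ is Lebesgue measure on a bounded domain, hence automatically regular; your route, which runs Brenier from $\nu$ to $\mu$ and then conjugates, needs the same two regularity assumptions, just encountered in the opposite order. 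Second, in the equality case of Fenchel--Young the correct conclusion is $\sigma(x) \in \partial\varphi^*(G(x))$ for $\lambda$-a.e.\ $x$; you then need $\mu$-a.e.\ differentiability of $\varphi^*$ to collapse the subdifferential to the singleton $\{S(x)\}$ and conclude uniqueness. With those hypotheses made explicit, and the standard integrability estimates for the Kantorovich potentials under the second-moment assumptions (which you correctly identify as the remaining technical point), your argument is sound.
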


Given the similarities between the structure of this theorem -- the composition of $G \circ S$ versus our construction of finding $H$ such that $H \circ G$ is a convex gradient  -- we naturally question whether our construction is some sort of special case. The question can be posed as, given $G$ that satisfies \eqref{eq:pde}, is there a measure $\nu$ such that $S$ is smooth and $DS_{G(x)} = DG^T_x$ ?  Then $G$ convexifies $S$ in the sense of \ref{eq:cvx}. We hope to explore this connection in future work.


\subsection{Open questions about convex functions}
\label{apx:expressivity}
An interesting question to consider is the following, given any smooth convex $\varphi :\R^d \to \R$, does there exist a pair of vector fields $G,H$, where $H$ convexifies $G$ and 
\[ H \circ G = \nabla \varphi\]

\section{Proof of Theorem 2}
\label{apx:proofthm2}
Here, we prove Theorem 2:
\begin{theorem}
  The Jacobian of $F$, $DF$ takes the form 
  \[ DF = [DG]^T DG \]
  by Theorem 1, this implies there exists a convex $\varphi : \R^n \to \R$ such that $F = \nabla \varphi$.
\end{theorem}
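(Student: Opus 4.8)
The plan is to compute $DF$ entrywise and show that each entry equals the corresponding entry of $[DG]^\top DG$, after which the convexity conclusion is immediate from Theorem~\ref{thm:cvx_pot}. Write $M(x) := [DG_x]^\top DG_x$, a symmetric matrix field whose entries are exactly the dot products $M_{ij}(x) = \frac{\partial G}{\partial x^i} \cdot \frac{\partial G}{\partial x^j}$ appearing in the PDE~\eqref{eq:pde}. In these terms the convexification~\eqref{eq:int_model} reads componentwise as $F_i(x) = \int_0^1 \sum_j M_{ij}(sx)\, x^j\, ds$, and the goal reduces to verifying $\frac{\partial F_i}{\partial x^k}(x) = M_{ik}(x)$ for all $i,k$.

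First I would reinterpret the hypothesis~\eqref{eq:pde} as an integrability condition on $M$. Differentiating the dot product $M_{ij} = \frac{\partial G}{\partial x^i}\cdot\frac{\partial G}{\partial x^j}$ by the product rule gives $\partial_k M_{ij} = \frac{\partial^2 G}{\partial x^k \partial x^i}\cdot \frac{\partial G}{\partial x^j} + \frac{\partial G}{\partial x^i}\cdot \frac{\partial^2 G}{\partial x^k \partial x^j}$, and similarly for $\partial_j M_{ik}$. Subtracting, the term carrying $\frac{\partial G}{\partial x^i}$ cancels once the mixed second partials of $G$ are commuted, and what remains is precisely~\eqref{eq:pde}. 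Hence the PDE is equivalent to $\partial_k M_{ij} = \partial_j M_{ik}$ for all $i,j,k$ — exactly the symmetry a matrix field must satisfy to be the Jacobian of some vector field.

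Next I would differentiate $F_i$ under the integral sign, which is justified since $G$, and hence $M$, is smooth and the integration domain is the compact interval $[0,1]$. The chain rule on $M_{ij}(sx)$ contributes a factor of $s$, while the explicit $x^j$ contributes a Kronecker delta, yielding $\frac{\partial F_i}{\partial x^k} = \int_0^1 \big[\, s\sum_j (\partial_k M_{ij})(sx)\, x^j + M_{ik}(sx)\,\big]\, ds$. Applying the integrability identity from the previous step replaces $\partial_k M_{ij}$ with $\partial_j M_{ik}$, so that $\sum_j (\partial_j M_{ik})(sx)\, x^j = \frac{d}{ds} M_{ik}(sx)$ by the chain rule. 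The integrand then collapses to the total derivative $\frac{d}{ds}\big[\, s\, M_{ik}(sx)\,\big]$ by the product rule, and the fundamental theorem of calculus gives $\frac{\partial F_i}{\partial x^k} = s\, M_{ik}(sx)\big|_{s=0}^{s=1} = M_{ik}(x)$. Assembling these entries yields $DF = [DG]^\top DG$, and Theorem~\ref{thm:cvx_pot} then produces the convex $\varphi$ with $F = \nabla\varphi$.

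The main obstacle is the middle step: correctly extracting the integrability condition $\partial_k M_{ij} = \partial_j M_{ik}$ from the somewhat unusual form of~\eqref{eq:pde}, which is stated in terms of dot products of derivative columns of $G$ rather than in terms of $M$ directly. Reconciling this with the ordinary symmetry-of-Jacobian requirement, and tracking the cancellation of the cross terms via commuting mixed partials, is the crux that makes the radial homotopy integral telescope into a single total derivative; the remaining computation is routine differentiation under the integral.
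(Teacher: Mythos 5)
Your proof is correct, but it takes a genuinely different route from the paper's. The paper argues in the language of differential forms: it interprets each row of $[DG]^\top DG$ as a 1-form $\omega_k = \sum_i \left( \frac{\partial G}{\partial x^k} \cdot \frac{\partial G}{\partial x^i} \right) dx^i$, computes $d\omega_k$ to show that \eqref{eq:pde} is exactly the closedness condition $d\omega_k = 0$, invokes the Poincar\'e lemma to obtain potentials $F_k$ with $dF_k = \omega_k$, and finally identifies the line integral \eqref{eq:int_model} with $F_k(x) - F_k(0)$ via Stokes' theorem. You establish the same integrability condition --- your symmetry $\partial_k M_{ij} = \partial_j M_{ik}$ is the coordinate form of closedness of the rows, and your bookkeeping there is sound: the cross terms carrying $\frac{\partial G}{\partial x^i}$ do cancel by equality of mixed partials, and what remains is \eqref{eq:pde} up to a harmless relabeling of the triple $(i,j,k)$ --- but you then bypass the Poincar\'e lemma and Stokes entirely: differentiating \eqref{eq:int_model} under the integral sign, the symmetry turns the integrand into the total derivative $\frac{d}{ds}\left[ s\, M_{ik}(sx) \right]$, which the fundamental theorem of calculus telescopes to $M_{ik}(x)$. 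This is precisely the radial-homotopy proof of the Poincar\'e lemma for 1-forms, inlined as an explicit computation. What each approach buys: the paper's version keeps the geometric content visible (closed versus exact forms), which it reuses in the proof of Theorem~\ref{thm:exist} and in the pullback-metric discussion of Section~\ref{sec:open_beltrami}; yours is self-contained and elementary, needing only differentiation under the integral and the fundamental theorem of calculus, and it avoids the paper's somewhat loose step in which a stack of 1-forms is integrated componentwise ``by Stokes.'' Both proofs conclude identically, feeding $DF = [DG]^\top DG$ into Theorem~\ref{thm:cvx_pot}, and both hinge on the same hypotheses: smoothness of $G$ (to commute mixed partials and justify differentiating under the integral) together with \eqref{eq:pde}.
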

\begin{proof}
  To start, we derive the PDE \eqref{eq:pde}. This equation comes from interpreting each row of $[DG_x]^T DG_x$ as a 1-form, then checking the closedness condition.\footnote{Or alternatively, interpreting the 1-forms as vector valued 0-forms.} Precisely, for the $k$th row, we have

  \[ \left[[DG_x]^T DG_x\right]_k = \begin{pmatrix} \frac{\partial G}{\partial x^k} \cdot \frac{\partial G}{\partial x^1} & \cdots & \frac{\partial G}{\partial x^k} \cdot \frac{\partial G}{\partial x^i} & \cdots &\frac{\partial G}{\partial x^k} \cdot \frac{\partial G}{\partial x^d}\end{pmatrix}\]
  so interpreting this as a 1-form leads to
  \[ \omega_k = \sum_{i=1}^d  \left( \frac{\partial G}{\partial x^k} \cdot \frac{\partial G}{\partial x^i} \right) dx^i \] 
  so then taking the exterior derivative yields 
  \begin{align*} d\omega_k &= \sum_{i=1}^n d\left( \frac{\partial G}{\partial x^k} \cdot\frac{\partial G}{\partial x^i} \right) dx^i \\
  &= \sum_{i=1}^n \sum_{j=1}^n \frac{\partial}{\partial x^j} \left( \frac{\partial G}{\partial x^k} \cdot\frac{\partial G}{\partial x^i} \right) dx^j \wedge dx^i\\
  &= \sum_{i=1}^n \sum_{j=1}^n \left[\left( \frac{\partial^2 G}{\partial x^j \partial x^k} \right) \cdot\frac{\partial G}{\partial x^i}   + \frac{\partial G}{\partial x^k} \cdot \left( \frac{\partial^2 G}{\partial x^j \partial x^i}\right) \right] dx^j \wedge dx^i \\
    &= \sum_{i=1}^n \sum_{j=1}^i \left( \frac{\partial^2 G_i}{\partial x^j \partial x^k} - \frac{\partial^2 G_j}{\partial x^k \partial x^i}\right) dx^i \wedge dx^j
  \end{align*}
  So $d\omega_k=0$ if and only if
  \[ \left( \frac{\partial^2 G_i}{\partial x^j \partial x^k} - \frac{\partial^2 G_j}{\partial x^k \partial x^i}\right) = 0 \qquad 1 \leq i,j,k \leq n \]
  again, by the Poincare lemma, if $d\omega_k =0$, then there exists a 0-form (smooth function) $F_k$ such that $\nabla F_k = \omega_k$. In this case, by Stokes theorem, we have 
  \[ \int_0^1 [DG_{sx}]^TDG_{sx} x ds = \int_0^1 \begin{pmatrix} \omega_1 \\ \vdots \\ \omega_d \end{pmatrix} = \int_{[0,1]} \begin{pmatrix} dF_1 \\ \vdots \\ dF_d \end{pmatrix} =  \begin{pmatrix} F_1 \\ \vdots \\ F_d \end{pmatrix} \Bigg \vert_0^1 = \begin{pmatrix} F_1(x) - F_1(0) \\ \vdots \\ F_d(x) - F_d(0) \end{pmatrix}   \]
  so setting $F = (F_1,\cdots,F_d)$ we have a vector field that satisfies 
  \[ DF_x = [DG_x]^T [DG_x]\]
\end{proof}

\section{Proof of Theorem 3}
\begin{proof}
Assume there exists $H$ that convexifies $G$. Then by construction, the Jacobian of $H \circ G$ takes the form $D(H \circ G) = [DG]^T_x [DG]_x$. But this means that by Stokes' theorem \eqref{eq:int_model} becomes
\[ \int_0^1 [DG]^T_{sx} [DG]_{sx} ds = \int_0^1 D(H \circ G)_{sx} x ds = H \circ G(sx) \big \vert_0^1 = H \circ G (x) - H\circ G (0) = H\circ G(x)\]
Where we without loss of generality assume $H \circ G (0) = 0$.
\end{proof}

\section{Further Directions}
\subsection{Additive Structure}

\label{apx:additive}
An alternative to developing deeper networks might be to attempt to add solutions that satisfy \eqref{eq:pde}. The construction \eqref{eq:int_model} is highly non-linear, so computing \eqref{eq:int_model} for a sum may be enough to significantly increase the complexity of our model.

It is easy to check that for two solutions $F$ and $G$, 
\begin{gather*} \frac{\partial^2 (G+F)}{\partial x^k \partial x^i} \cdot \frac{\partial (G+F)}{\partial x^j} = \frac{\partial^2 (G+F)}{\partial x^k \partial x^j} \cdot \frac{\partial (G+F)}{\partial x^i} \qquad \forall 1 \leq i,j,k \leq n
  \end{gather*}
 reduces to 
 \begin{gather*} \frac{\partial^2 G}{\partial x^k \partial x^i} \cdot \frac{\partial F}{\partial x^j} + \frac{\partial^2 F}{\partial x^k \partial x^i} \cdot \frac{\partial G}{\partial x^j}= \frac{\partial^2 G}{\partial x^k \partial x^j} \cdot \frac{\partial F}{\partial x^i} + \frac{\partial^2 F}{\partial x^k \partial x^j} \cdot \frac{\partial G}{\partial x^i}
  \end{gather*}
  so if we could enforce the condition that 
  \[ \frac{\partial^2 G}{\partial x^k \partial x^i} \cdot \frac{\partial F}{\partial x^j} = \frac{\partial^2 F}{\partial x^k \partial x^i} \cdot \frac{\partial G}{\partial x^j}  \qquad \forall i,j,k : i \neq j\]
  then $F+G$ would also be a solution to \eqref{eq:pde}. If we could enforce this, in theory we could add up to $n$ solutions in $\R^n$ in this manner, which may be of use in high dimensions. Of course, it's not obvious how to do this in the case where $F,G$ are neural networks, but it is possible a slight modification of the structure would be sufficient.

\subsection{Different Products}
\label{apx:prod}
While we have focused on structuring the integrand structured via the Gram product $[DG]^T DG$ (due to the chain rule interpretation), we could consider other products. The construction changes, however, because for the two other cases below, we need the matrices we combine to be PSD symmetric to begin with. This means that the only obvious way to use another product, if we denote it $a(\cdot, \cdot)$, is to start with two convex gradients $F$,$G$ (we could model using the constructions in Example 2), 
and integrate their composition $a(DF,DG)$. A few options are:

\begin{itemize}
    \item The Hadamard product, i.e 
\[ DG \odot DF \text{ for } G=\nabla g \qquad F = \nabla f \]
could be used. The famous Schur product theorem tells us that the Hadamard product of any two PSD matrices remains PSD. \cite{horn1994matrix} Unfortunately, though, we lose the interpretation as a Jacobian of a composition that the Gram product yields. Furthermore we still have a similar problem in that not any $F$ and $G$ can be used while recovering a result similar to \ref{thm:jacfac}, in this case, the PDE we require the pair $F,G$ to satisfy takes the form 
\[ \frac{\partial G_k}{\partial x^i \partial x^k} \left( \frac{\partial F_k}{\partial x^j} - \frac{\partial F_k}{\partial x^i} \right) = \frac{\partial F_k}{\partial x^i \partial x^k} \left( \frac{\partial G_k}{\partial x^j} - \frac{\partial G_k}{\partial x^i} \right) \]
Like \eqref{eq:pde}, it is not easily to consider when a pair will satisfy this equation. We leave questions concerning this construction to future work.

\item The Kronecker product could be also be considered. It also has the nice property that the Kronecker product of two PSD matrices remains PSD. \cite{horn1994matrix} However, we still have the issue of making the rows of $DF \otimes DG$ closed as 1-forms. One can derive a similar PDE as above, but the system is even more complex and restrictive, so we omit it. 
\end{itemize}

\end{document}